\documentclass[10pt,twoside]{article} 

\usepackage[accepted]{rlj}

\usepackage{amssymb}            
\usepackage{mathtools}        
\usepackage{mathrsfs}           
\usepackage{graphicx}          
\usepackage{subcaption}        
\usepackage[space]{grffile}     
\usepackage{url}                
\usepackage{lipsum}            

\usepackage[ruled,vlined,linesnumbered]{algorithm2e}
\usepackage{tikz}
\usetikzlibrary{automata, positioning, arrows.meta, matrix, calc}
\usepackage{float}
\usepackage{amsthm}
\newtheorem{theorem}{Theorem}[section]

\title{A Bellman Optimality Equation for Plasticity}
\setrunningtitle{A Bellman Optimality Equation for Plasticity}
\author{Jeremy Lucas\textsuperscript{1}, Doina Precup\textsuperscript{1}}
\emails{jeremy.lucas@mila.quebec, precupdo@mila.quebec}
\affiliations{
    $^1$Mila -- Quebec Artificial Intelligence Institute \\
    Montreal, Quebec, Canada
}

\usepackage{fancyhdr}
\begin{document}

\maketitle 

\begin{abstract}
In continual reinforcement learning, carefully managing the stability-plasticity tradeoff remains a core challenge. Recent work by \citet{abel2025plasticity} formalized this dilemma by defining plasticity as the generalized directed information from an agent’s observations to its actions, and empowerment as the generalized directed information from its actions to its observations. This formulation successfully reframes the traditional stability-plasticity tradeoff as an empowerment-plasticity tradeoff. However, while extensive literature exists on optimizing for empowerment, there is currently no research addressing the optimization of plasticity under this new definition. This paper presents preliminary work toward optimizing plasticity within Markov decision processes. We show that there exists a Bellman optimality equation for optimizing plasticity similar to previous work for empowerment. 
\end{abstract}

\section{Introduction}
\label{sec:introduction}

Conceptually, empowerment is interpreted as an agent's ability to control its future, whereas plasticity represents its capacity to remain malleable by the environment. Recently, \cite{abel2025plasticity} connected these two domains by formalizing plasticity as the mirror of empowerment. While empowerment describes controllability by maximizing the information flowing from an agent's actions to its subsequent observations, \cite{abel2025plasticity} defined plasticity as the information flowing in the opposite direction. They theorize that designing robust continual learning agents requires maintaining both empowerment and plasticity above critical thresholds, and this claim is the motivation for this work.

To build algorithms that address this claim, both empowerment and plasticity must be actively optimized and controlled. The literature presents numerous methods for maximizing empowerment, ranging from its introduction by \cite{klyubin2005empowerment} to the Bellman formulation of \cite{leibfried2019unified} and the deep variational approaches of \cite{mohamed2015variational}. However, plasticity optimization remains unexplored. Plasticity has primarily been studied through the lens of loss of plasticity in deep architectures \citep{lyle2023understanding, dohare2024loss}, and is often addressed using architectural or training methods \citep{dohare2021continual}. Consequently, no existing work has formalized the direct optimization of the new information-theoretic definition. In this paper, we bridge this gap by deriving a Bellman optimality equation for plasticity. 

The remainder of this paper is structured as follows: Section 2 establishes the necessary mathematical preliminaries; Section 3 details the formal derivation of our Bellman optimality equation; Section 4 introduces our benchmark environment; and Section 5 presents empirical evaluations across both the benchmark and two scaled-up environments.

\section{Background}
\label{sec:background}

\subsection{Empowerment and Plasticity via Directed Information}
From \cite{klyubin2005empowerment} the formal definition of empowerment is the channel capacity between an agent's action channel and the next state that follows these actions. 

\begin{equation}
\mathcal{E}_t(S_t) = \max_{p(A_t^n)} I(A_t^n; S_{t+n} \mid S_t)
\end{equation}

This requires an agent with an open loop system that can produce and execute $n$ actions every state. \cite{capdepuy2010informational} extended this framework by incorporating a feedback loop. They redefined empowerment as the directed information from the agent's actions to its subsequent observations.

\begin{equation}
I(A_t^N \rightarrow O_{t+1}^N) = \sum_{n=0}^{N-1} I(A_t^{n+1}; O_{t+1+n} \mid O_{t+1}^n)
\end{equation}

More recently, \cite{abel2025plasticity} built off of the definition and ideas from \cite{capdepuy2010informational} and defined empowerment through general directed information. They define the empowerment of an agent architecture $\Lambda$ within an environment $e$ relative to an action window $[a:b]$ and observation window $[c:d]$ as:

\begin{equation}
\mathfrak{E}_{c:d}^{a:b}(\Lambda, e) \triangleq \max_{\lambda \in \Lambda} I(A_{a:b} \rightarrow O_{c:d})
\end{equation}

Similar to empowerment, \cite{abel2025plasticity} define the plasticity of an agent $\lambda$ within a set of environments $\mathcal{E}$ relative to an observation window $[a:b]$ and action window $[c:d]$ as:

\begin{equation}
\mathfrak{B}_{c:d}^{a:b}(\lambda, \mathcal{E}) \triangleq \max_{e \in \mathcal{E}} I(O_{a:b} \rightarrow A_{c:d})
\end{equation}

In this paper we are interested in their definition with a single agent $\lambda$ interaction with a specific environment $e$.

\subsection{Value-Based Information Maximization}

While directed information provides a complete formalism to analyze these concepts, it is very difficult to optimize directly in Markov decision processes. Value based methods proposed in \cite{leibfried2019unified, TiomkinTishby2017} formulate generalized Bellman equations to make the optimization for empowerment tractable. \cite{leibfried2019unified} introduced a unified framework for the maximization of reward and empowerment. They introduce two parameters to control focus on reward versus empowerment. They define an empowerment Bellman optimality equation given as the following: 

\begin{equation}
V^*(s) = \max_{\pi_{\text{behave}}, q} \mathbb{E}_{a \sim \pi_{\text{behave}}(\cdot \mid s)} \left[ \alpha R(s, a) + \mathbb{E}_{s' \sim P(\cdot \mid s, a)} \left[ \beta \log \left( \frac{q(a \mid s', s)}{\pi_{\text{behave}}(a \mid s)} \right) + \gamma V^*(s') \right] \right]
\end{equation}

We can now develop a Bellman optimality equation for plasticity similar to this formulation. We focus on the scenario where there are no external rewards (or $\alpha$ is set to $0$) to isolate and examine the information-theoretic value of plasticity and the resulting policies without the influence of external rewards.

\section{The Bellman Optimality Equation for Plasticity}
\label{sec:maximizing_mi}

\subsection{Plasticity and Empowerment Value Channels}
In \cite{leibfried2019unified}, they create a value function that isolates the present and future information. Given $S_t$, they use $I(A_t, S_{t+1} \mid S_t)$ as the step reward and $V(S_{t+1})$ as the future reward which looks like the following.  

\begin{center}
\begin{tikzpicture}[
    >=Stealth, 
    state/.style={minimum size=0.8cm, fill=blue!5, rounded corners=2pt, anchor=center},
    action/.style={minimum size=0.7cm, rounded corners, fill=orange!5, anchor=center},
    dots/.style={minimum size=0.5cm, anchor=center},
    history/.style={fill=yellow!30, draw=yellow!60, thick},
    reward/.style={fill=red!25, draw=red!50, thick},
    future/.style={fill=green!25, draw=green!50, thick}
]
    \draw[red!10, fill=red!5, rounded corners=12pt, dashed, draw=red!40] (1.0, -1.9) rectangle (3.5, 0.5);
    \node[red!80!black, font=\scriptsize\bfseries, anchor=north] at (2.25, -1.9) {$I(A_t; S_{t+1} \mid S_t)$};

    \draw[green!10, fill=green!3, rounded corners=10pt, dotted, draw=green!40] (4.0, -1.9) rectangle (6.5, 0.5);
    \node[green!60!black, font=\tiny\bfseries, anchor=north] at (5.25, -1.9) {$\gamma V(S_{t+1})$};
    
    \draw[green!10, fill=green!3, rounded corners=10pt, dotted, draw=green!40] (6.75, -1.9) rectangle (9.0, 0.5);
    \node[green!60!black, font=\tiny\bfseries, anchor=north] at (7.875, -1.9) {};

    \node[state, history] (S0)    at (0, 0)   {$S_t$};
    \node[state, reward]  (S1)    at (3, 0)   {$S_{t+1}$};
    \node[state, future]  (S2)    at (6, 0)   {$S_{t+2}$};
    \node[dots, future]   (Sdots) at (8.5, 0) {$\dots$};
    \node[state, future]  (Sn)    at (11, 0)  {$S_{t+n}$};

    \node[action, reward] (A0) at (1.5, -1.5) {$A_t$};
    \node[action, future] (A1) at (4.5, -1.5) {$A_{t+1}$};
    \node[action, future] (A2) at (7.25, -1.5) {$A_{t+2}$};
    \node[action, future] (An) at (9.75, -1.5) {$A_{t+n-1}$};

    \draw[->, thick, black] (S0) -- (A0);
    \draw[->, thick, red!70!black] (A0) -- (S1);
    
    \draw[->, thick, black] (S1) -- (A1);
    \draw[->, thick, green!60!black] (A1) -- (S2);
    \draw[->, thick, black] (S2) -- (A2);
    \draw[->, thick, green!60!black] (A2) -- (Sdots);
    \draw[->, thick, black] (Sdots) -- (An);
    \draw[->, thick, green!60!black] (An) -- (Sn);

\end{tikzpicture}
\end{center}

In this paper we formulate the equivalent counterpart for plasticity. We use the step reward $I(S_{t+1}, A_{t+1}| S_t, A_t)$ and the future context $V(S_{t+1}, A_{t+1})$ as the future value.

\begin{center}
\begin{tikzpicture}[
    >=Stealth, 
    state/.style={minimum size=0.8cm, fill=blue!5, rounded corners=2pt, anchor=center},
    action/.style={minimum size=0.7cm, rounded corners, fill=orange!5, anchor=center},
    dots/.style={minimum size=0.5cm, anchor=center},
    history/.style={fill=yellow!30, draw=yellow!60, thick},
    reward/.style={fill=red!25, draw=red!50, thick},
    future/.style={fill=green!25, draw=green!50, thick}
]

    \draw[red!10, fill=red!5, rounded corners=12pt, dashed, draw=red!40] (2.5, -1.9) rectangle (5.0, 0.5);
    \node[red!80!black, font=\scriptsize\bfseries, anchor=north] at (3.75, -1.9) {$I(S_{t+1}; A_{t+1} \mid S_t, A_t)$};

    \draw[green!10, fill=green!3, rounded corners=10pt, dotted, draw=green!40] (5.5, -1.9) rectangle (7.75, 0.5);
    \node[green!60!black, font=\tiny\bfseries, anchor=north] at (6.625, -1.9) {$\gamma V(S_{t+1}, A_{t+1})$};
    
    \draw[green!10, fill=green!3, rounded corners=10pt, dotted, draw=green!40] (8.0, -1.9) rectangle (10.25, 0.5);
    \node[green!60!black, font=\tiny\bfseries, anchor=north] at (9.125, -1.9) {};

    \node[state, history] (S0)    at (0, 0)   {$S_t$};
    \node[state, reward]  (S1)    at (3, 0)   {$S_{t+1}$};
    \node[state, future]  (S2)    at (6, 0)   {$S_{t+2}$};
    \node[dots, future]   (Sdots) at (8.5, 0) {$\dots$};
    \node[state, future]  (Sn)    at (11, 0)  {$S_{t+n}$};

    \node[action, history] (A0) at (1.5, -1.5) {$A_t$};
    \node[action, reward]  (A1) at (4.5, -1.5) {$A_{t+1}$};
    \node[action, future]  (A2) at (7.25, -1.5) {$A_{t+2}$};
    \node[action, future]  (An) at (9.75, -1.5) {$A_{t+n-1}$};

    \draw[->, thick, black] (S0) -- (A0);
    
    \draw[->, thick, black] (A0) -- (S1);
    \draw[->, thick, red!70!black] (S1) -- (A1);
    
    \draw[->, thick, black] (A1) -- (S2);
    \draw[->, thick, green!60!black] (S2) -- (A2);
    \draw[->, thick, black] (A2) -- (Sdots);
    \draw[->, thick, green!60!black] (Sdots) -- (An);
    \draw[->, thick, black] (An) -- (Sn);

\end{tikzpicture}
\end{center}

\subsection{The Bellman Optimality Equation Definition}
We now introduce the Bellman optimality equation for plasticity. We establish its theoretical foundations by proving that the associated Bellman operator is a contraction, with the full proof deferred to Appendix A. 

\begin{equation}
V^*(s_{t-1}, a_{t-1}) = \max_{\pi \in \Pi(s_{t-1}, a_{t-1})} \underset{s_t \sim p}{\mathbb{E}} \left[ \underset{a_t \sim \pi}{\mathbb{E}} \left[ \log \frac{\pi(a_t \mid s_{t-1}, a_{t-1}, s_t)}{q(a_t \mid s_{t-1}, a_{t-1})} + \gamma V^*(s_t, a_t) \right] \right]
\end{equation}

Here, $\Pi(s_{t-1}, a_{t-1})$ denotes the set of policies that match the current policy everywhere except for the non-zero values of $p(\cdot \mid s_{t-1}, a_{t-1})$. The $q$ marginal is defined below: 

\begin{equation}
q(a_t \mid s_{t-1}, a_{t-1}) = \sum_{s_t} p(s_t \mid s_{t-1}, a_{t-1}) \cdot \pi(a_t \mid s_{t-1}, a_{t-1}, s_t)
\end{equation}

To apply this Bellman value function in a tabular learning setting, the policy must be augmented with the historical state and action. For each history context, the action distributions for all next states $s_t$ that can arise from $p(s_t \mid s_{t-1}, a_{t-1})$ must be optimized jointly due to the coupling from the $q$ marginal. A tractable computation for this is described below. 

\subsection{Algorithm Optimization}
The Bellman operator can be restricted to deterministic policies, as proven in the appendix. Therefore, the search set for each history context consists of the max over the distinct combinations of deterministic action distributions. This requires searching through $\vert{}A\vert{}^{\vert{}S\vert{}}$ combinations, where $\vert{}A\vert{}$ is the number of actions and $\vert{}S\vert{}$ is the number of distinct states arising from each history context. This search space becomes rapidly intractable and a further optimization must be made. 

\subsection{Multi-Weight Knapsack Dynamic Programming}
Because the marginal distribution $q(a_t \mid s_{t-1}, a_{t-1})$ is shared, the action choices across all next-states $s_t$ are coupled and cannot be optimized individually. However, this joint search space can be significantly reduced by framing the optimization as a Multi-Weight Knapsack problem. 

In this formulation, each next-state $s_t$ acts as an item for which we must select exactly one action $a_t$. Choosing action $a_t$ contributes a "weight" equal to the transition probability $p(s_t \mid s_{t-1}, a_{t-1})$ to the corresponding coordinate of the $q$ marginal, while yielding a linear future value of $\gamma V(s_t, a_t)$. The dimensions of our knapsack track the accumulated weight distribution across the action space. By using dynamic programming, we track identical accumulated weight states across action-selection paths, allowing us to prune suboptimal combinations and stop redundant computation.

\begin{algorithm}[H]
\SetAlgoLined
\DontPrintSemicolon
\KwIn{State space $\mathcal{S}$, Action space $\mathcal{A}$, transition probabilities $p(s_t \mid s_{t-1}, a_{t-1})$, discount factor $\gamma$, convergence threshold $\theta > 0$}
\KwOut{Optimal value function $V^*(s, a)$ and optimal policy $\pi^*(a_t \mid s_{t-1}, a_{t-1}, s_t)$}

Initialize $V(s, a) \leftarrow 0$ for all $(s, a) \in \mathcal{S} \times \mathcal{A}$\;
Initialize $\pi(a_t \mid s_{t-1}, a_{t-1}, s_t) \leftarrow \mathbf{1}(a_t = a^*)$ where $a^* \sim U(\mathcal{A})$, for all $(s_{t-1}, a_{t-1}, s_t, a_t) \in \mathcal{S} \times \mathcal{A} \times \mathcal{S} \times \mathcal{A}$\;

\Repeat{$\Delta < \theta$}{
    $\Delta \leftarrow 0$\;
    \ForEach{$(s_{t-1}, a_{t-1}) \in \mathcal{S} \times \mathcal{A}$}{
        $v \leftarrow V(s_{t-1}, a_{t-1})$\;
        
        $V(s_{t-1}, a_{t-1}), \pi^*(\cdot \mid s_{t-1}, a_{t-1}, \cdot) \leftarrow \text{Solve Multi-Weight Knapsack DP}(\pi, V, p, \gamma)$\;
        
        $\Delta \leftarrow \max(\Delta, |v - V(s_{t-1}, a_{t-1})|)$\;
    }
}
\caption{Tabular Value Iteration for Plasticity}
\label{alg:value_iteration}
\end{algorithm}

\section{The Plasticity-Empowerment Landscape}
\label{sec:landscape}

To build intuition regarding the dynamics of plasticity and empowerment, we introduce a minimal benchmark environment designed to isolate unique policy interactions between the two metrics. The simplest architecture identified that retains a rich, non-trivial plasticity-empowerment landscape is the Markov Decision Process illustrated in Figure~\ref{fig:mdp_ultra_simple}.

\begin{figure}[h!]
\centering
\begin{tikzpicture}[
    ->, >=Stealth, 
    shorten >=1pt, 
    auto, 
    node distance=4cm, 
    thick,
    state/.style={circle, draw, minimum size=1.2cm, font=\large}
]

    \node[state] (S0) {$S_0$};
    \node[state, right=of S0] (S1) {$S_1$};

    \path (S0) edge[loop left, distance=1.5cm] node[align=center] {$a_{0,1} : 0.5$} (S0);
    \path (S0) edge[bend left=20] node {$a_{0,1} : 0.5$} (S1);

    \path (S1) edge[bend left=20] node[above] {$a_0 : 1.0$} (S0);
    \path (S1) edge[loop right, distance=1.5cm] node {$a_1 : 1.0$} (S1);

\end{tikzpicture}
\caption{The Control-Gated MDP Benchmark.}
\label{fig:mdp_ultra_simple}
\end{figure}

We designate this environment as the \textit{Control-Gated MDP}. Despite its minimal size, it exhibits several compelling properties for analysis:
\begin{enumerate}
    \item \textbf{Computational Efficiency:} Consisting of only two states and two actions, it allows for exact, rapid numerical optimization and complete policy enumeration.
    \item \textbf{Asymmetric Controllability:} It features a highly empowering state ($S_1$) alongside a non-empowering state ($S_0$).
\end{enumerate}

By grid-searching across all valid stochastic policies $\pi(a \mid s)$ at a granularity of $0.01$, we map the discretized, empirical plasticity-empowerment rate landscape shown in Figure~\ref{fig:my_image}. The plasticity and empowerment is a measure of total directed information from states to actions and vice-versa for the given policy and environment. Although both metrics are dependent on the environment, the maximum information rate (in bits) for plasticity is bounded by $\log_2|A|$ while for empowerment it is bounded by $\log_2|S|$.
\begin{figure}[htbp]
    \centering
    \includegraphics[width=0.6\textwidth]{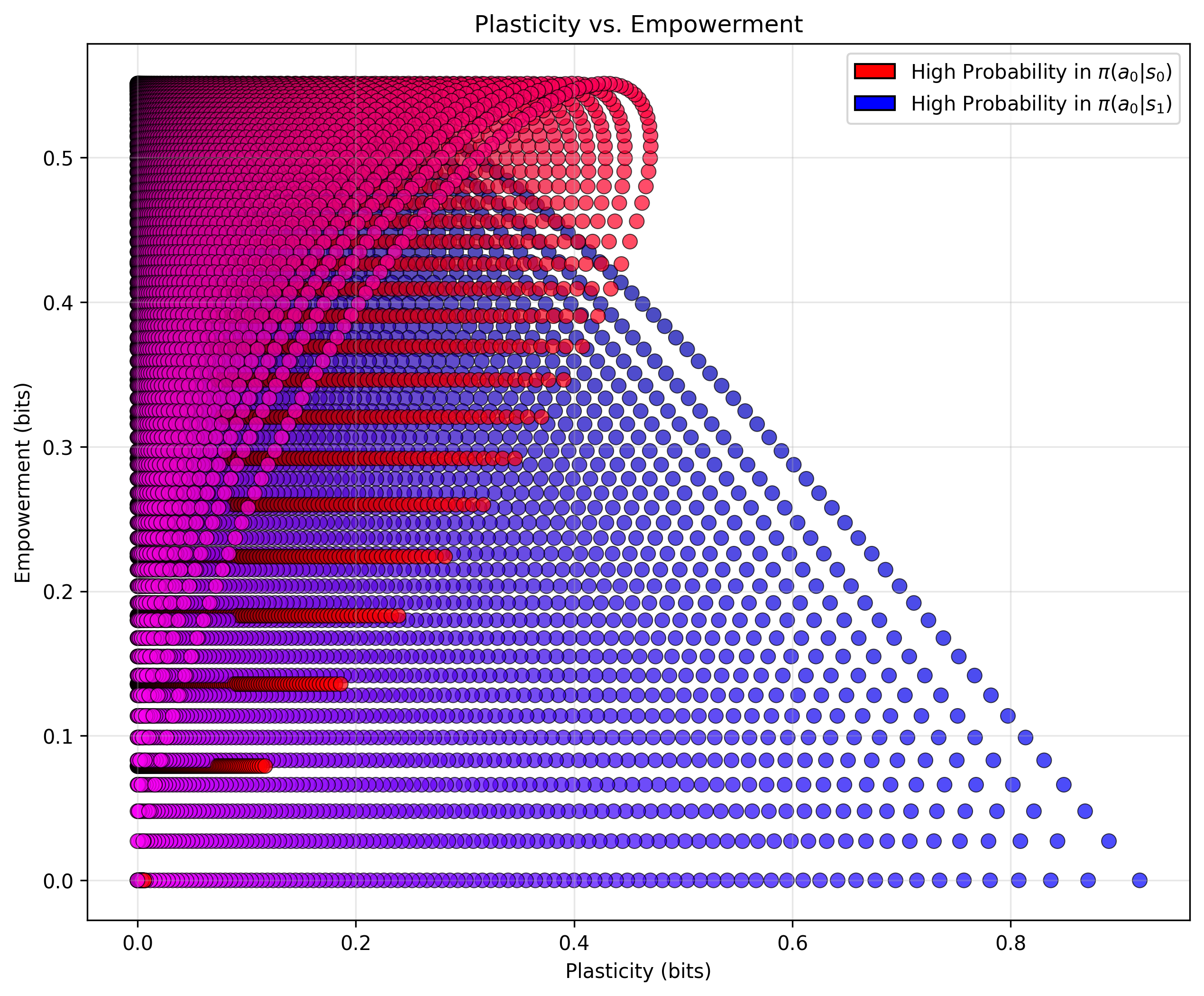}
    \caption{The empirical plasticity-empowerment frontier mapped via policy enumeration.}
    \label{fig:my_image}
\end{figure}

The plasticity-maximizing policy $\pi^*_{\text{plastic}}$ selects actions deterministically, with $\pi^*_{\text{plastic}}(1 \mid 0) = 1$ and $\pi^*_{\text{plastic}}(0 \mid 1) = 1$. Conversely, the empowerment-maximizing policy $\pi^*_{\text{empower}}$ exhibits more distributed behavior: it acts uniformly in state $0$, yielding $\pi^*_{\text{empower}}(a \mid 0) = \frac{1}{2}$ for $a \in \{0, 1\}$, while in state $1$ it selects actions with probabilities $\pi^*_{\text{empower}}(0 \mid 1) = \frac{1}{3}$ and $\pi^*_{\text{empower}}(1 \mid 1) = \frac{2}{3}$. Visually, this divergence is reflected in the policy landscape: highly plastic policies occupy the blue regions of the plot, whereas empowering policies are characterized by a mixture of blue and red regions.

\section{Experiments}
\label{sec:experiments}

\subsection{Control-Gated MDP}
\label{subsec:noise_mdp}
We evaluate our Bellman optimality equation in the Control-Gated MDP. We record each history context value and the current plasticity and empowerment at each iteration.

\begin{figure}[htbp]
    \centering
    \includegraphics[width=0.8\textwidth]{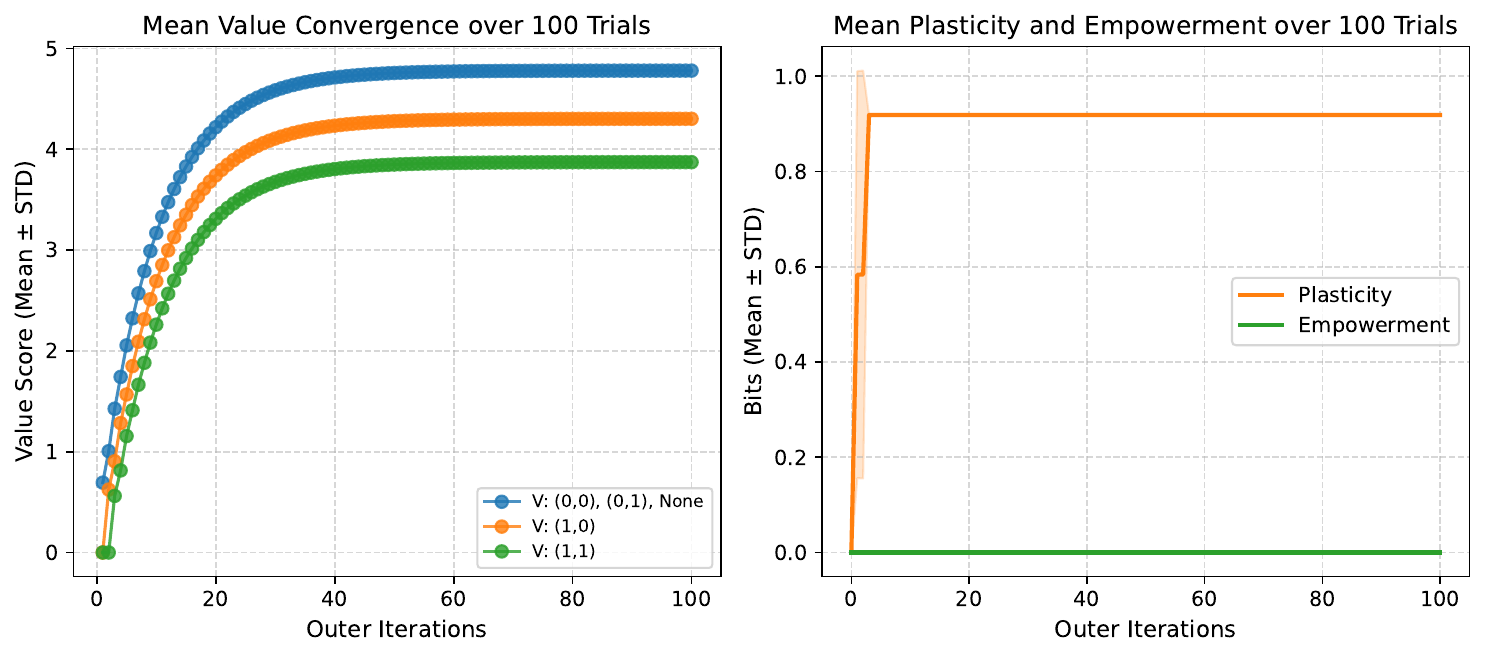}
    \caption{Results from the Control-Gated MDP Benchmark}
    \label{res_control}
\end{figure}

From our results, it is clear that the value iteration is monotonically non-decreasing and converges to the optimal value function across all history contexts. Intuitively, the history contexts with deterministic transitions, $(1,0)$ and $(1,1)$, yield the lowest values. Among these, $(1,1)$ results in the minimum value because it transitions back to state $1$, where the agent again faces low-value prospects, while $(1,0)$ leads to state $0$. 

The empowerment and plasticity results show that the policy ultimately converges to approximately $0.92$ bits of plasticity and $0$ bits of empowerment. The non-zero standard deviation observed in the early iterations is a consequence of the random selection among multiple optimal policies that share the same value but vary in their total plasticity. This highlights the discrepancy between our Bellman optimality equation and the directed information definition of plasticity. 

\subsection{Road Environment}
\label{subsec:road_env}

The Road Environment consists of 5 states $\{-2, -1, 0, 1, 2\}$, signifying the five lanes of a road, and 3 actions $\{-1, 0, 1\}$, describing the directions the agent can turn. At each step, the agent selects an action and the environmental wind is sampled uniformly from $\{-1, 0, 1\}$. The agent's action and the wind are added together to determine the next state of the agent. The transition dynamics are bounded such that neither the agent's action nor the wind can cause the agent to leave the road. The agent starts in a random lane. 
\begin{figure}[htbp]
    \centering
    \includegraphics[width=0.6\textwidth]{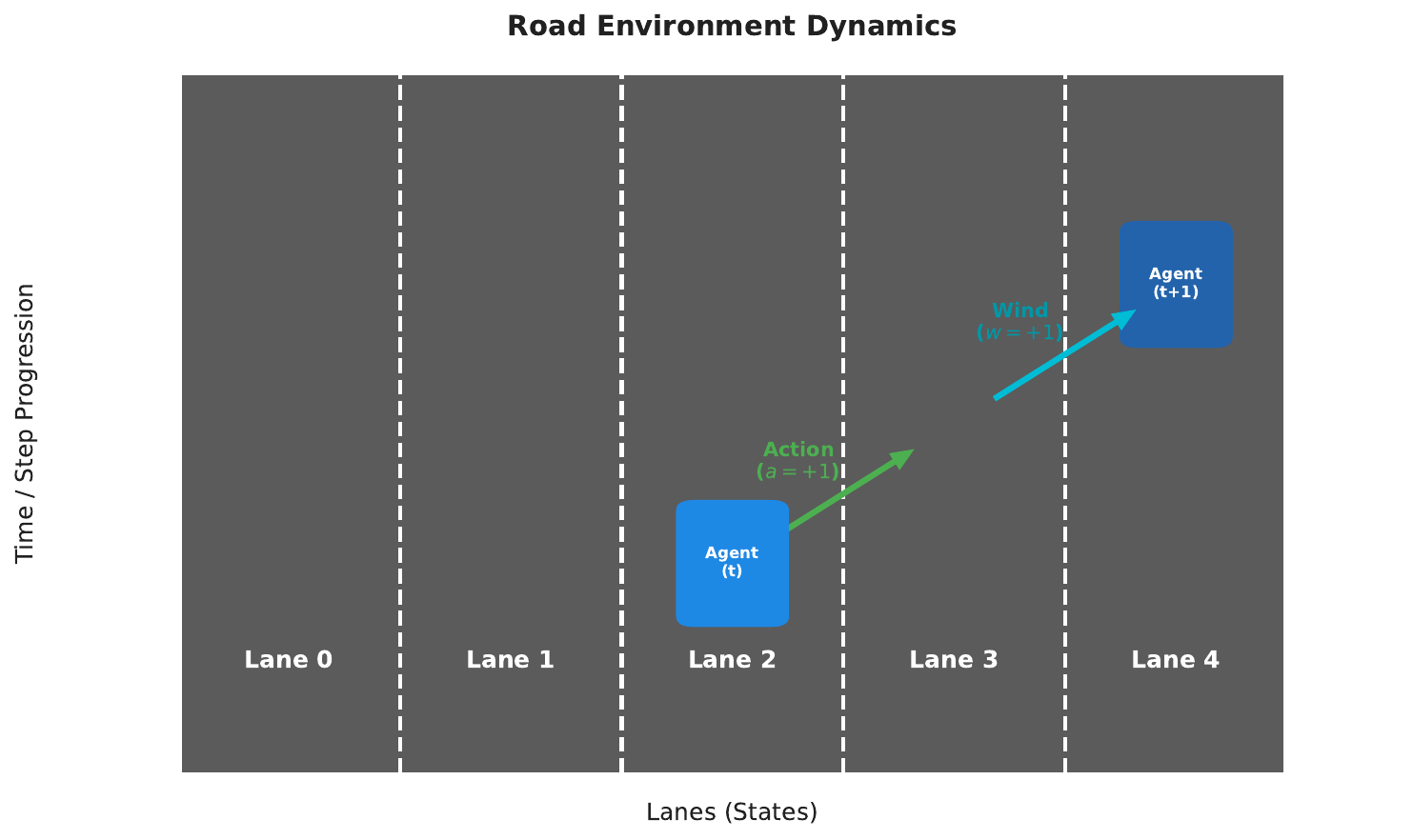}
    \caption{Road Environment}
    \label{fig:road_env}
\end{figure}
\begin{figure}[htbp]
    \centering
    \includegraphics[width=0.8\textwidth]{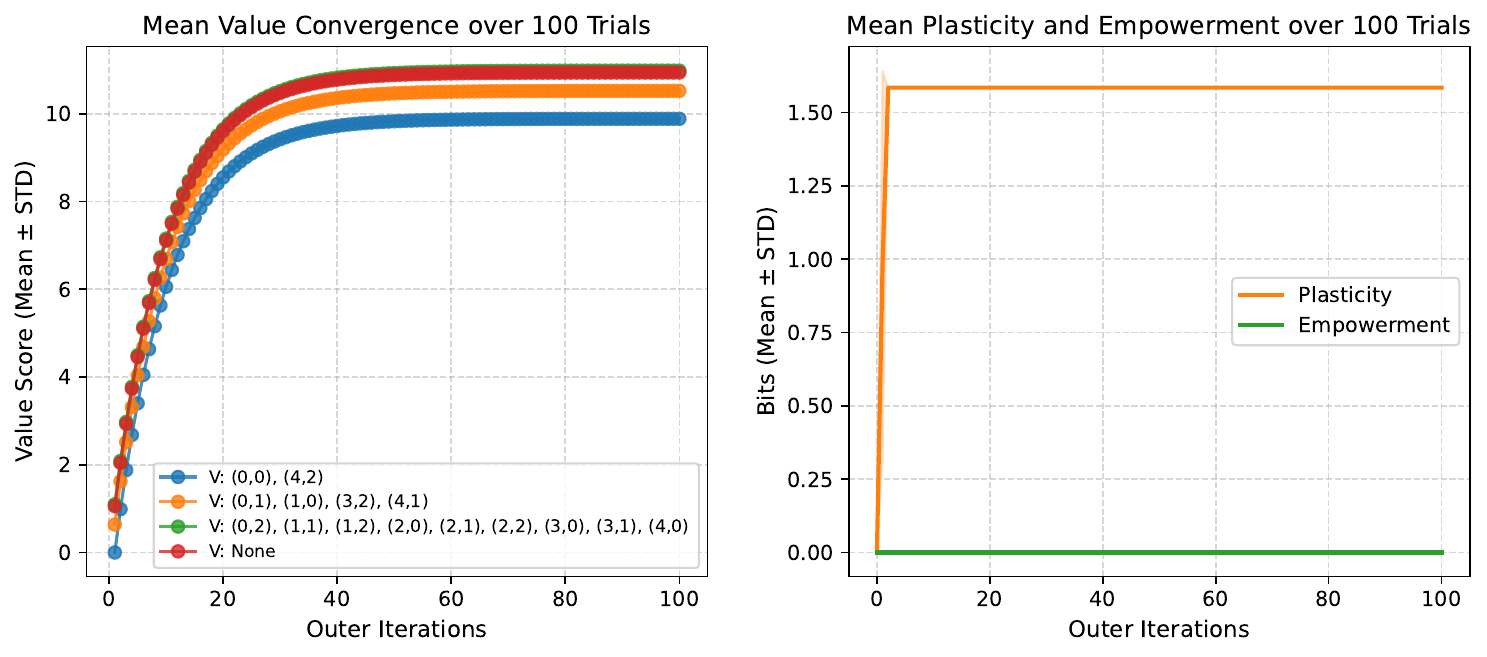}
    \caption{Results from Road Environment}
    \label{fig:res_road_env}
\end{figure}

From these results, attempting to leave the road—corresponding to history contexts $(0,0)$ and $(4,2)$—yields the least amount of value. This is followed by staying straight in the outer lanes, $(0,1)$ and $(4,1)$, along with moving to the outside from the second inner lanes, $(1,0)$ and $(3,2)$. Interestingly, the initial "None" history, where no steps have yet occurred, exhibits a very slightly less plastic value than the inner dynamics of the environment.

\subsection{Two-Room Environment}
This final environment is a $5 \times 7$ gridworld containing $35$ states and a $4$-directional action space, consisting of two distinct rooms. The first room is fully noisy, where all actions result in a transition to a random state within that same room. The second room is fully deterministic, allowing the agent to navigate freely. At the beginning of the episode, the agent must choose which bridge to cross to enter its respective room. If the agent fails to cross a bridge, they fall into a river that continuously flows downward toward the bottom of the grid, where the episode terminates. While the agent is in the river, all state transitions are completely deterministic, and the agent's actions have no effect on their downward flow.

\begin{figure}[htbp]
    \centering
    \includegraphics[width=0.8\textwidth]{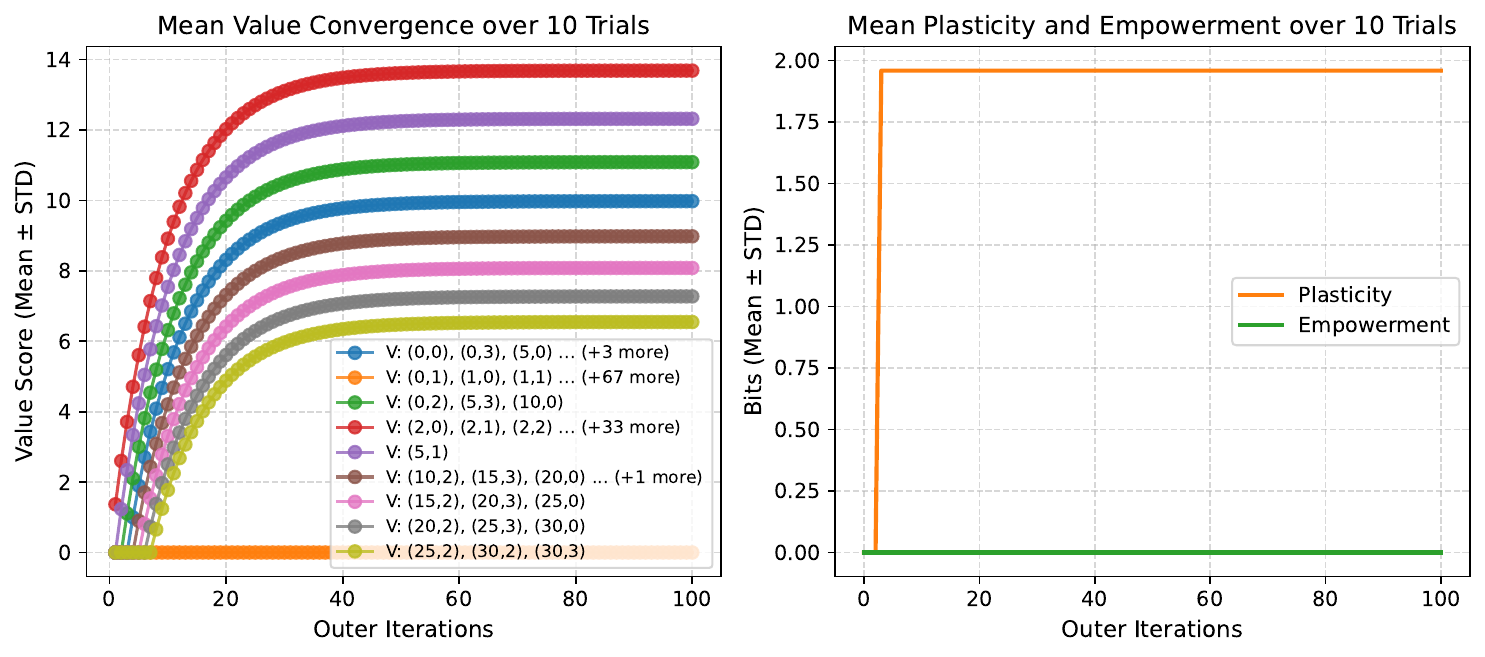}
    \caption{Results from the Two-Room Environment}
    \label{fig:res_two}
\end{figure}

\begin{figure}[htbp]
    \centering
    \includegraphics[width=0.5\textwidth]{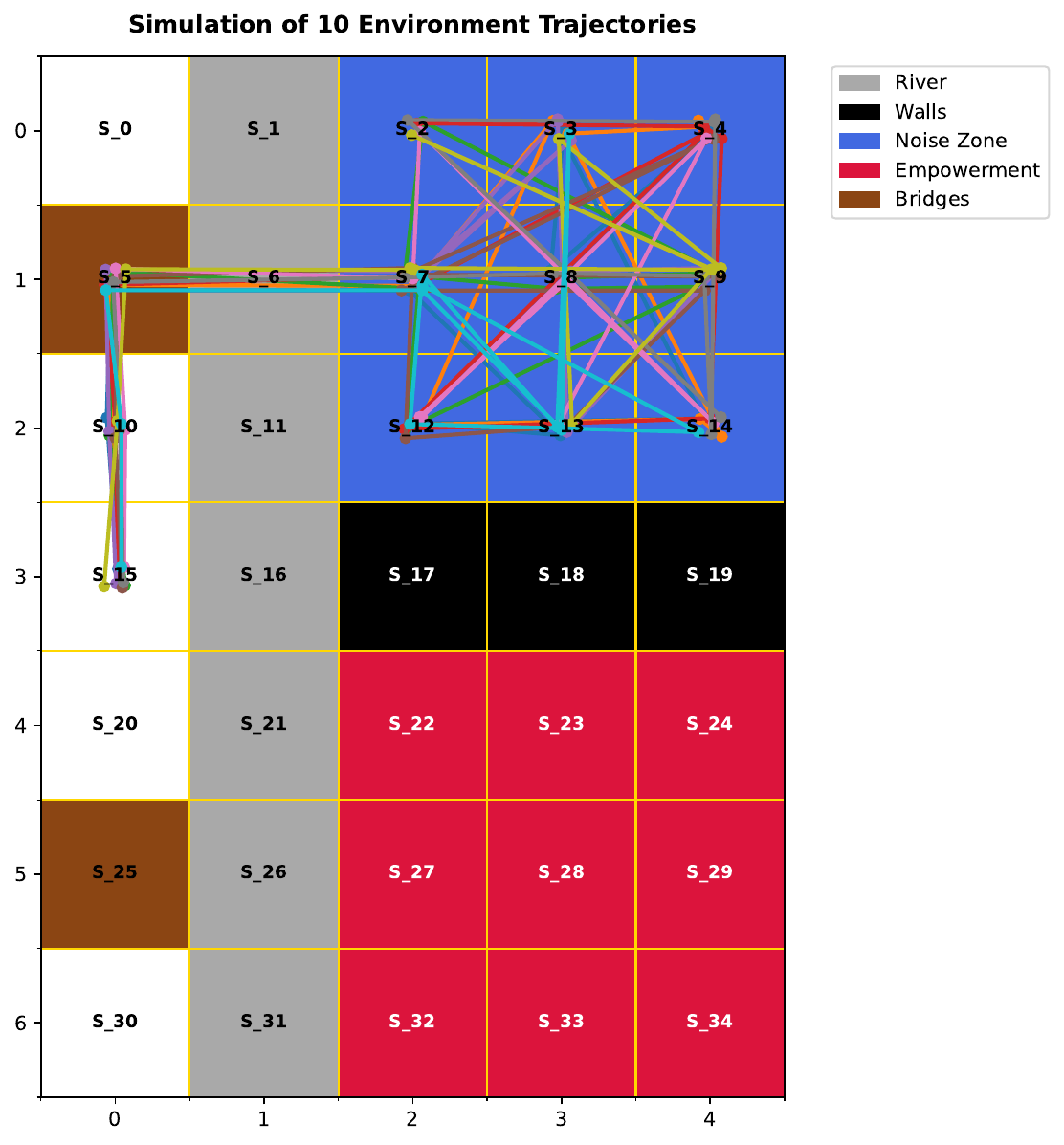}
    \caption{Policy Simulation in the Two-Room Environment}
    \label{fig:pol_two}
\end{figure}
In this gridworld environment, there are too many history contexts to analyze individually, but the policy simulation clearly demonstrates that the agent navigates toward the bridge to the noisy room to capture the maximum plastic value.

\section{Future Work}
\label{sec:future_work}
A natural progression is to develop an online, sample-based algorithm to scale effectively to continuous, high-dimensional environments by parameterizing the policy and value function with deep neural networks. Finally, this framework can be integrated alongside existing empowerment-maximization algorithms which will open the door to agents that can dynamically balance plasticity, empowerment, and external reward during continual learning.

\subsubsection*{Broader Impact Statement}
\label{sec:broaderImpact}

This work introduces a foundational Bellman optimality equation focused on maximizing an agent's plasticity in Markov decision processes. Because it operates at a fundamental algorithmic level, it does not carry immediate negative societal impacts or direct applications to sensitive domains.

\section*{Acknowledgements}
We would like to thank Zihan Wang for inspiration for the Road Environment. 
\appendix

\bibliography{main}
\bibliographystyle{rlj}


\clearpage 
\appendix
\onecolumn

\section{Proofs and Theoretical Derivations}
\label{sec:appendix_proofs}

\subsection{Contraction Mapping Property of the Bellman Operator}

Let $\mathcal{V}$ be the Banach space of bounded, real-valued functions mapping from history state-action space to $\mathbb{R}$. The supremum norm $\| \cdot \|_\infty$ is defined for any $V \in \mathcal{V}$, $\|V\|_\infty = \sup_{s, a} |V(s, a)|$. Let $\gamma \in (0, 1)$ be the discount factor.

We define the Bellman operator $T: \mathcal{V} \to \mathcal{V}$ associated with the value function as:
\begin{equation}
(TV)(s_{t-1}, a_{t-1}) = \max_{\pi \in \Pi(s_{t-1}, a_{t-1})} \underset{s_t \sim p}{\mathbb{E}} \left[ \underset{a_t \sim \pi}{\mathbb{E}} \left[ \log \frac{\pi(a_t \mid s_{t-1}, a_{t-1}, s_t)}{q(a_t \mid s_{t-1}, a_{t-1})} + \gamma V(s_t, a_t) \right] \right]
\end{equation}

\begin{theorem}
The operator $T$ is a $\gamma$-contraction mapping under the supremum norm, i.e., for any $V_1, V_2 \in \mathcal{V}$:
\begin{equation}
\|TV_1 - TV_2\|_\infty \leq \gamma \|V_1 - V_2\|_\infty
\end{equation}
\end{theorem}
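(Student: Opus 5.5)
The plan is the standard argument for max-type Bellman operators: the per-context objective splits into a $V$-independent information term plus a term linear in $V$, and the maximum of two functionals differs by at most the supremum of their difference. Fix a history context $(s_{t-1},a_{t-1})$ and a candidate $\pi \in \Pi(s_{t-1},a_{t-1})$. Write the objective inside the max as
\begin{equation*}
J_V(\pi) = \underbrace{\underset{s_t \sim p}{\mathbb{E}}\, \underset{a_t \sim \pi}{\mathbb{E}} \left[ \log \frac{\pi(a_t \mid s_{t-1}, a_{t-1}, s_t)}{q(a_t \mid s_{t-1}, a_{t-1})} \right]}_{I_\pi} \;+\; \gamma\, \underset{s_t \sim p}{\mathbb{E}}\, \underset{a_t \sim \pi}{\mathbb{E}} \left[ V(s_t,a_t) \right].
\end{equation*}
The term $I_\pi$ is the conditional mutual information $I(S_t;A_t \mid s_{t-1},a_{t-1})$ induced by $\pi$ and $p$, because $q$ is exactly the induced marginal. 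So $I_\pi$ depends on $\pi$ but not on $V$, and $0 \le I_\pi \le \log|\mathcal{A}|$. This bound also shows that $T$ maps bounded functions to bounded functions, so $T:\mathcal{V}\to\mathcal{V}$ is well defined.

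Next, for any fixed $\pi$ the information terms cancel, leaving
\begin{equation*}
|J_{V_1}(\pi) - J_{V_2}(\pi)| = \gamma \left| \mathbb{E}_{s_t,a_t}[V_1(s_t,a_t) - V_2(s_t,a_t)] \right| \le \gamma \|V_1 - V_2\|_\infty,
\end{equation*}
since an expectation under a probability measure is bounded by the sup norm.

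Then I would apply the elementary inequality $|\sup_x f(x) - \sup_x g(x)| \le \sup_x |f(x)-g(x)|$. It holds whenever both suprema are finite, which is guaranteed by boundedness of $I_\pi$ and $V$. The key point is that the feasible set $\Pi(s_{t-1},a_{t-1})$ is the same for $V_1$ and $V_2$, so the two maxima are over an identical index set. One could worry that $\Pi(s_{t-1},a_{t-1})$ is defined relative to ``the current policy''. For a fixed application of $T$, though, the entries outside the support of $p(\cdot\mid s_{t-1},a_{t-1})$ enter neither $I_\pi$ nor the expectation. The optimization is therefore effectively over the conditionals $\pi(\cdot\mid s_{t-1},a_{t-1},s_t)$ for $s_t$ in the support, and this set does not depend on $V$.

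The step I expect to be the main obstacle is making the max legitimate: the maximum should be attained, or at least the supremum finite, and it must be taken over the same set for both operators. Finiteness follows from the bounds above. For attainment, the feasible set is a finite product of simplices and hence compact. $J_V$ is continuous on it: the $V$ term is linear, and $I_\pi$ is continuous because mutual information over finite alphabets is continuous, using the convention $0\log 0 = 0$. If attainment were doubtful, I would argue with $\epsilon$-optimal policies instead. Taking $\pi_1$ near-optimal for $V_1$ gives $(TV_1)-(TV_2) \le J_{V_1}(\pi_1) - J_{V_2}(\pi_1) + \epsilon \le \gamma\|V_1-V_2\|_\infty+\epsilon$, and the symmetric argument bounds the other direction. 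Letting $\epsilon \to 0$ and taking the supremum over contexts $(s_{t-1},a_{t-1})$ yields $\|TV_1 - TV_2\|_\infty \le \gamma \|V_1-V_2\|_\infty$.
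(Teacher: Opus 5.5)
Your proposal is correct and follows essentially the same route as the paper. You split off the $V$-independent log term, apply $|\max_x f - \max_x g| \le \max_x |f-g|$ over the common feasible set $\Pi(s_{t-1},a_{t-1})$, bound the remaining $\gamma\,\mathbb{E}[V_1 - V_2]$ by $\gamma\|V_1-V_2\|_\infty$, and take the supremum over contexts. Your extra checks fill in details the paper's proof leaves implicit: that the log term is a conditional mutual information bounded by $\log|\mathcal{A}|$, so $T$ maps $\mathcal{V}$ into $\mathcal{V}$; that the feasible set does not depend on $V$ or on the rest of the policy; and that the maximum is attained, or can be replaced by $\epsilon$-optimal policies.
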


\begin{proof}
Let $V_1, V_2 \in \mathcal{V}$ be two arbitrary value functions. For any state-action context $(s_{t-1}, a_{t-1})$, we evaluate the absolute difference $|(TV_1)(s_{t-1}, a_{t-1}) - (TV_2)(s_{t-1}, a_{t-1})|$. 

Using the algebraic property of the maximum, $|\max_x f(x) - \max_x g(x)| \leq \max_x |f(x) - g(x)|$, we can upper-bound the difference as follows:

\begin{equation}
\begin{split}
    \big| (TV_1)(s_{t-1}, a_{t-1}) & - (TV_2)(s_{t-1}, a_{t-1}) \big| \\
    &= \Bigg| \max_{\pi \in \Pi(s_{t-1}, a_{t-1})} \underset{s_t \sim p}{\mathbb{E}} \left[ \underset{a_t \sim \pi}{\mathbb{E}} \left[ \log \frac{\pi(a_t \mid s_{t-1}, a_{t-1}, s_t)}{q(a_t \mid s_{t-1}, a_{t-1})} + \gamma V_1(s_t, a_t) \right] \right] \\
    &\quad - \max_{\pi \in \Pi(s_{t-1}, a_{t-1})} \underset{s_t \sim p}{\mathbb{E}} \left[ \underset{a_t \sim \pi}{\mathbb{E}} \left[ \log \frac{\pi(a_t \mid s_{t-1}, a_{t-1}, s_t)}{q(a_t \mid s_{t-1}, a_{t-1})} + \gamma V_2(s_t, a_t) \right] \right] \Bigg| \\
    &\leq \max_{\pi \in \Pi(s_{t-1}, a_{t-1})} \Bigg| \underset{s_t \sim p}{\mathbb{E}} \left[ \underset{a_t \sim \pi}{\mathbb{E}} \left[ \log \frac{\pi(a_t \mid s_{t-1}, a_{t-1}, s_t)}{q(a_t \mid s_{t-1}, a_{t-1})} + \gamma V_1(s_t, a_t) \right] \right] \\
    &\quad - \underset{s_t \sim p}{\mathbb{E}} \left[ \underset{a_t \sim \pi}{\mathbb{E}} \left[ \log \frac{\pi(a_t \mid s_{t-1}, a_{t-1}, s_t)}{q(a_t \mid s_{t-1}, a_{t-1})} + \gamma V_2(s_t, a_t) \right] \right] \Bigg|
\end{split}
\end{equation}

The identical log terms cancel out within the absolute value:
\begin{align}
|(TV_1)(s_{t-1}, a_{t-1}) - (TV_2)(s_{t-1}, a_{t-1})| &\leq \max_{\pi \in \Pi(s_{t-1}, a_{t-1})} \Bigg| \underset{s_t \sim p}{\mathbb{E}} \left[ \underset{a_t \sim \pi}{\mathbb{E}} \left[ \gamma \big(V_1(s_t, a_t) - V_2(s_t, a_t)\big) \right] \right] \Bigg|
\end{align}

By definition of the supremum norm, $|V_1(s_t, a_t) - V_2(s_t, a_t)| \leq \|V_1 - V_2\|_\infty$ holds for all $(s_t, a_t)$. 
\begin{align}
|(TV_1)(s_{t-1}, a_{t-1}) - (TV_2)(s_{t-1}, a_{t-1})| &\leq \gamma \max_{\pi \in \Pi(s_{t-1}, a_{t-1})} \underset{s_t \sim p}{\mathbb{E}} \left[ \underset{a_t \sim \pi}{\mathbb{E}} \left[ \|V_1 - V_2\|_\infty \right] \right] \nonumber \\
\end{align}

Because valid probability distributions integrate to 1, we can pull the norm out of the expectations:
\begin{align}
|(TV_1)(s_{t-1}, a_{t-1}) - (TV_2)(s_{t-1}, a_{t-1})| &\leq \gamma \|V_1 - V_2\|_\infty
\end{align}

Since this upper bound is uniform and independent of the choice of $(s_{t-1}, a_{t-1})$, taking the supremum over all state-action contexts on the left-hand side completes the proof:
\begin{equation}
\|TV_1 - TV_2\|_\infty = \sup_{s_{t-1}, a_{t-1}} |(TV_1)(s_{t-1}, a_{t-1}) - (TV_2)(s_{t-1}, a_{t-1})| \leq \gamma \|V_1 - V_2\|_\infty
\end{equation}

Given $\gamma \in (0, 1)$, the operator $T$ is a $\gamma$-contraction mapping. By the Banach Fixed-Point Theorem, $T$ possesses a unique fixed point $V^*$.
\end{proof}

\subsection{Limiting the Bellman Operator to Policy Endpoints}

We now show that the optimization domain of the Bellman operator $T$ can be restricted entirely to deterministic policies without altering the operator itself.

\begin{theorem}
For any value function $V \in \mathcal{V}$, the objective function inside the Bellman operator optimization problem:

\begin{equation}
f(\pi) = \underset{s_t \sim p}{\mathbb{E}} \left[ \underset{a_t \sim \pi}{\mathbb{E}} \left[ \log \frac{\pi(a_t \mid s_{t-1}, a_{t-1}, s_t)}{q(a_t \mid s_{t-1}, a_{t-1})} + \gamma V(s_t, a_t) \right] \right]
\end{equation}

is convex with respect to $\pi$. Consequently, the maximum of $f(\pi)$ over the convex compact set $\Pi$ is attained at an extreme point of the policy space, corresponding to a deterministic action selection.
\end{theorem}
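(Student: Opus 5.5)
The plan is to split $f$ into an information term and a linear term, and prove convexity of each separately. Fix the context $(s_{t-1},a_{t-1})$ and abbreviate $p(s)=p(s\mid s_{t-1},a_{t-1})$ and $\pi_s(a)=\pi(a\mid s_{t-1},a_{t-1},s)$. The decision variable is then the finite tuple $(\pi_s)_{s:\,p(s)>0}$, which lies in a product of simplices. The objective becomes
\[
f(\pi)=\sum_s p(s)\sum_a \pi_s(a)\log\frac{\pi_s(a)}{q(a)} \;+\; \gamma\sum_s p(s)\sum_a \pi_s(a)V(s,a),
\qquad q(a)=\sum_s p(s)\pi_s(a).
\]
The second term is linear in $\pi$, hence convex, so everything rests on the first term.

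For the first term, rewrite it as a sum of perspective-type functions. Set $x_{s,a}=p(s)\pi_s(a)$, which is linear in $\pi$, and $y_{s,a}=p(s)q(a)$, which is also linear in $\pi$. Then
\[
\sum_s p(s)\sum_a \pi_s(a)\log\frac{\pi_s(a)}{q(a)}=\sum_{s,a} x_{s,a}\log\frac{x_{s,a}}{y_{s,a}} .
\]
This is the KL divergence $D\big(p(s)\pi_s(a)\,\|\,p(s)q(a)\big)$, i.e.\ the conditional mutual information $I(S_t;A_t\mid s_{t-1},a_{t-1})$. The function $(x,y)\mapsto x\log(x/y)$ is jointly convex on $\mathbb{R}_{\ge0}\times\mathbb{R}_{>0}$, since it is the perspective of $-\log$ (equivalently, by the log-sum inequality). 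A jointly convex function composed with an affine map $\pi\mapsto(x,y)$ is convex, and a sum of convex functions is convex. So the first term is convex in $\pi$, and $f$ is convex. Boundary cases with $q(a)=0$ are handled by the convention $0\log(0/0)=0$: the term is then lower semicontinuous, and convexity survives by taking limits.

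For the "consequently" part, $\Pi(s_{t-1},a_{t-1})$ is a product of probability simplices, with all other contexts held fixed. It is therefore a compact convex polytope, and its extreme points are exactly the tuples in which every $\pi_s$ is a vertex, i.e.\ a deterministic action choice. Since $f$ is continuous on this compact set, a maximum exists. To place it at a vertex, write any maximizer $\pi^\star$ as a convex combination $\sum_k\lambda_k\pi^{(k)}$ of extreme points, which is possible by Krein--Milman, or simply Carathéodory since the set is finite-dimensional. Convexity then gives $f(\pi^\star)\le\sum_k\lambda_k f(\pi^{(k)})\le\max_k f(\pi^{(k)})$, so some deterministic $\pi^{(k)}$ attains the maximum. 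This justifies restricting the search to the $|A|^{|S|}$ deterministic combinations, as the algorithm section claims.

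The main obstacle is the convexity step. One must see that the coupling through $q$, which depends on $\pi$, does not break convexity. A naive per-term argument on $\pi\log\pi-\pi\log q$ fails because $-\log q$ is convex, and the product form hides what is going on. The key move is therefore the reparametrization as $x\log(x/y)$ with both arguments affine in $\pi$, which makes the joint convexity of relative entropy directly applicable. This matches the standard fact that mutual information is convex in the channel for a fixed input distribution. The remaining technicality is continuity at the simplex boundary, which I would handle with the usual $0\log 0$ conventions.
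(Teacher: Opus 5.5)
Your proof is correct, and it takes a genuinely different route from the paper's, one that actually handles the coupling through $q$.

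The paper works one next-state $s_t$ at a time. It splits $\phi(\mathbf{p})=\sum_i p_i\log p_i-\sum_i p_i\log q_i+\sum_i p_iV_i$ and declares $q$ \emph{fixed and independent of the optimization policy}, so the middle term becomes affine. It then applies Bauer's maximum principle on a single simplex $\Delta(\mathcal{A})$.

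You instead keep $q$ as the $\pi$-marginal. You rewrite the information term as $\sum_{s,a}x_{s,a}\log(x_{s,a}/y_{s,a})$ with both arguments affine in $\pi$, and invoke joint convexity of relative entropy, i.e.\ convexity of mutual information in the channel for a fixed input law. You then argue over the actual feasible set: a product of simplices indexed by $\mathrm{supp}\,p(\cdot\mid s_{t-1},a_{t-1})$, handled via Carath\'eodory and the finite Jensen inequality.

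The paper's decomposition is shorter, but it is valid only for an externally fixed $q$. Under the paper's own definition of the $q$ marginal, the cross term aggregates to $-\sum_s p(s)\sum_a\pi_s(a)\log q(a)=H(q)$, which is concave in $\pi$. So the \emph{affine hence convex} step does not hold as written. Nor does the claimed strict convexity, since $f$ is affine on the set where all $\pi_s$ coincide. Your reparametrization is what repairs this. Working on the product of simplices is also what justifies the algorithm's joint search over the $|\mathcal{A}|^{|\mathcal{S}|}$ deterministic combinations.

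One sharpening of your own diagnosis: the naive per-term argument fails not merely because $-\log q$ is convex. It fails because the coupled cross term is exactly the concave marginal entropy $H(q)$. Convexity of $f$ is a joint property of $H(q)-H(A_t\mid S_t)$, and that is what the joint convexity of $x\log(x/y)$ captures.
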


\begin{proof}
Let $V \in \mathcal{V}$ be an arbitrary value function. To analyze the convexity of $f(\pi)$ with respect to $\pi$, we can expand the inner expectation for a fixed next-state $s_t$. For structural clarity, let $p_i = \pi(a_i \mid s_{t-1}, a_{t-1}, s_t)$, $q_i = q(a_i \mid s_{t-1}, a_{t-1})$, and $V_i = \gamma V(s_t, a_i)$ where $i$ indexes the finite action space $\mathcal{A}$. 

We can rewrite the core objective function as a sum of three distinct components mapping from the probability simplex $\Delta(\mathcal{A}) \to \mathbb{R}$:
\begin{equation}
\phi(\mathbf{p}) = \sum_{i=1}^{|\mathcal{A}|} p_i \log p_i - \sum_{i=1}^{|\mathcal{A}|} p_i \log q_i + \sum_{i=1}^{|\mathcal{A}|} p_i V_i
\end{equation}

We will establish the convexity of $\phi(\mathbf{p})$ by analyzing each term independently:

\paragraph{1. Convexity of the negative entropy term:}
Let $g(p_i) = p_i \log p_i$ defined on the domain $p_i \in (0, 1]$. We prove convexity by showing that the second derivative of $p_i \log p_i$ is non-negative for $p_i > 0$. Evaluating the first and second derivatives with respect to $p_i$ yields:
\begin{equation}
g'(p_i) = \log p_i + 1, \quad \text{and} \quad g''(p_i) = \frac{1}{p_i}
\end{equation}
Since $p_i > 0$ for any valid probability vector inside the simplex, the second derivative is strictly positive ($g''(p_i) > 0$). This guarantees that $g(p_i)$ is strictly convex. Because the sum of convex functions preserves convexity, the aggregate term $\sum_{i=1}^{|\mathcal{A}|} p_i \log p_i$ is strictly convex.

\paragraph{2. Convexity of the cross-entropy baseline term:}
Let $h(p_i) = -p_i \log q_i$. Because the baseline distribution $q$ is fixed and independent of the optimization policy $\pi$, the value $-\log q_i$ treats itself as a constant coefficient. Taking the derivatives with respect to $p_i$ results in:
\begin{equation}
h'(p_i) = -\log q_i, \quad \text{and} \quad h''(p_i) = 0
\end{equation}
Since the second derivative is zero everywhere, $h(p_i)$ is an affine (linear) function. All affine functions are convex, implying the sum $-\sum_{i=1}^{|\mathcal{A}|} p_i \log q_i$ is a convex function of $\mathbf{p}$.

\paragraph{3. Convexity of the value function expectation term:}
Similarly, the value function parameters $V_i$ are independent of $\mathbf{p}$. The sum $\sum_{i=1}^{|\mathcal{A}|} p_i V_i$ forms a linear combination of the coordinates of $\mathbf{p}$, which makes it an affine function, and therefore it is inherently convex.

\bigskip
In conclusion, because $\phi(\mathbf{p})$ is a sum consisting of a strictly convex function and two affine functions, the entire objective function $\phi(\mathbf{p})$ is strictly convex across the action distribution vector space. Since the outer expectation $\mathbb{E}_{s_t \sim p}$ acts as a linear operator it preserves convexity. Therefore, the global objective function $f(\pi)$ is convex with respect to $\pi$.

By Bauer's Maximum Principle, a convex function optimized over a compact, convex set (the probability simplex $\Delta(\mathcal{A})$) must attain its maximum at one of its extreme points. The extreme points of a probability simplex are the vertices where the entire probability mass is allocated to a single choice:

\begin{equation}
\pi^*(a \mid s_{t-1}, a_{t-1}, s_t) = \delta(a - a^*)
\end{equation}
where $\delta$ is the Kronecker delta and $a^* \in \mathcal{A}$ represents a deterministic action choice. Consequently, limiting the policy search space exclusively to these deterministic endpoints yields an identical value under the operator.
\end{proof}
\clearpage
\section{Experimental Details}
\label{sec:supp_hyperparameters}

\begin{table}[h!]
\centering
\caption{Hyperparameter configurations for the Control-Gated MDP, Road Environment and Two-Room Environment.}
\label{tab:hyperparameters}
\vskip 0.15in
\begin{tabular}{lccc}
\hline
\textbf{Hyperparameter} & \textbf{Control-Gated MDP} & \textbf{Road Environment} & \textbf{Two-Room Environment} \\ \hline
Number of Trials ($T$) & 100 & 100 & 10 \\
Total Iterations ($N$) & 100 & 100 & 100\\
Discount Factor ($\gamma$) & 0.9 & 0.9 & 0.9 \\
\hline
\end{tabular}
\end{table}

\section{Code and Reproducibility}
\label{sec:code_reproducibility}

The complete source code to reproduce all experiments in this paper is publicly available on GitHub at:
\begin{center}
    \url{https://github.com/jeremy-lucas-mcgill/A-Bellman-Optimality-Equation-for-Plasticity}
\end{center}

\end{document}